\pgfplotsset{compat=newest}
\definecolor{rebutall}{rgb}{0.0,0.0,0.0}
\newcommand\rebutall[1]{\textcolor{rebutall}{ #1}}
\newtheorem{theorem}{Theorem}
\newtheorem{lemma}[theorem]{Lemma}
\definecolor{somegray}{rgb}{0.5, 0.5, 0.5}
\newcommand{\darkgrayed}[1]{\textcolor{somegray}{#1}}
\newcommand*\titleheader[1]{\gdef\@titleheader{#1}}
  \let\st@red@title\@title
  \def\@title{%
    \vskip-1.5em
    \bgroup\normalfont\large\centering\@titleheader\par\egroup
    \vskip0.5em\st@red@title}
\title{Deep Drone Acrobatics} 
\begin{document}

\author{Elia Kaufmann\IEEEauthorrefmark{1}\IEEEauthorrefmark{3},
        Antonio Loquercio\IEEEauthorrefmark{1}\IEEEauthorrefmark{3}\thanks{\IEEEauthorrefmark{1}Equal contribution.},
		Ren\'{e} Ranftl\IEEEauthorrefmark{2},
		Matthias M\"uller\IEEEauthorrefmark{2}\thanks{\IEEEauthorrefmark{3}Robotics and Perception Group, University of Zurich and ETH Zurich.},
		Vladlen Koltun\IEEEauthorrefmark{2},
        Davide Scaramuzza\IEEEauthorrefmark{3}\thanks{\IEEEauthorrefmark{2}Intelligent Systems Lab, Intel.}
}

\makeatletter
\g@addto@macro\@maketitle{
  \def\mycolspace{1.2mm}
 	\begin{tabular}{@{}c@{\hspace{\mycolspace}}c@{\hspace{\mycolspace}}c@{}}
\includegraphics[width=0.329\textwidth,trim=0 3.63cm 0 0.5cm, clip]{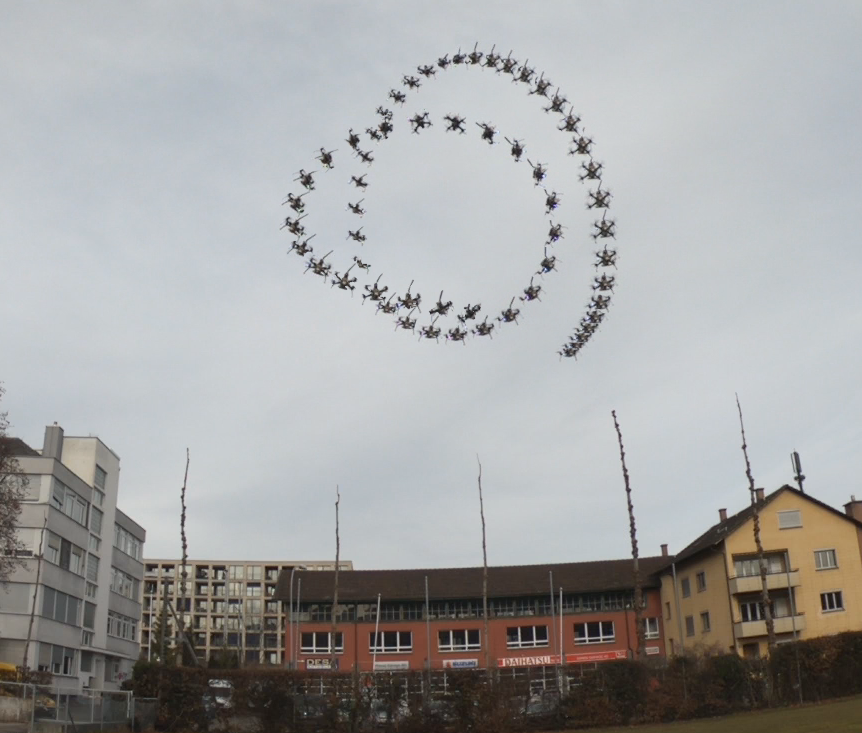} &
\includegraphics[width=0.329\textwidth]{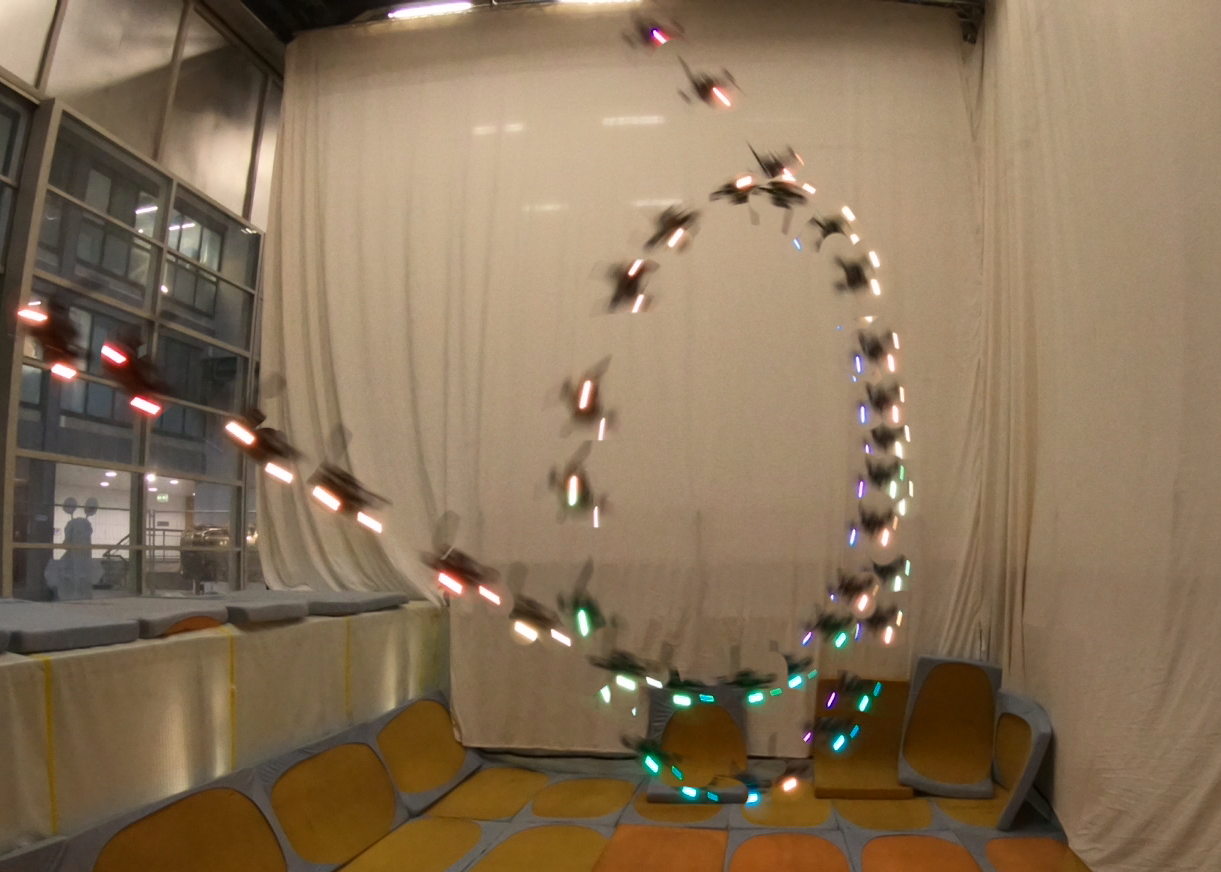} &
\includegraphics[width=0.329\textwidth,trim=0 5.9cm 5.9cm 0, clip]{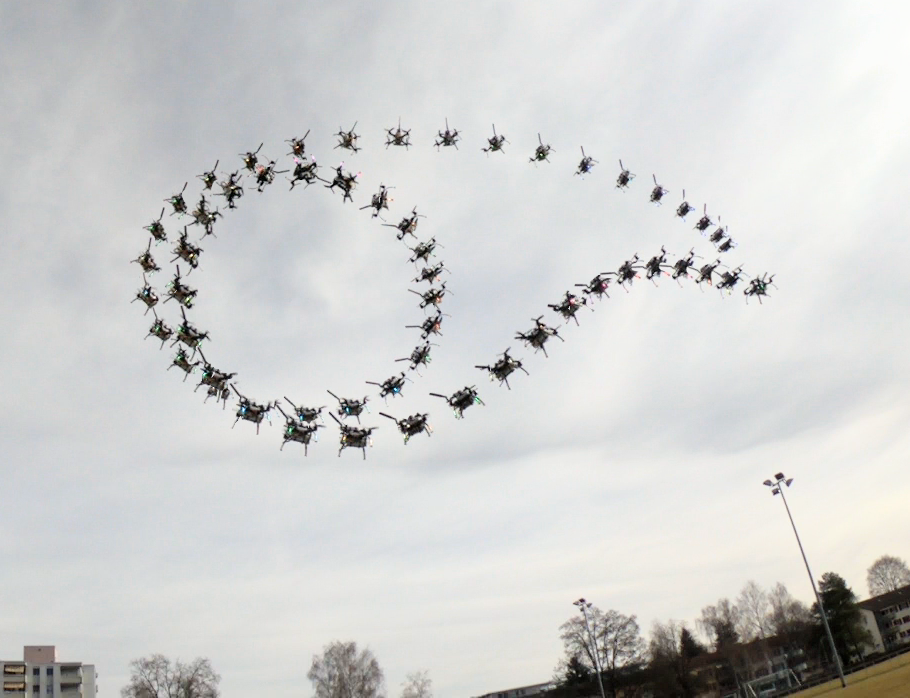}
	\end{tabular}
	\captionof{figure}{A quadrotor performs a Barrel Roll (left), a Power Loop (middle), and a Matty Flip (right). We safely train acrobatic controllers in simulation and deploy them with no fine-tuning (\emph{zero-shot transfer}) on physical quadrotors. The approach uses only onboard sensing and computation. No external motion tracking was used.
	\label{fig:catcheye}}
}
\makeatother
\maketitle

\begin{abstract}
Performing acrobatic maneuvers with quadrotors is extremely challenging.
Acrobatic flight requires high thrust and extreme angular accelerations that push the platform to its physical limits.
Professional drone pilots often measure their level of mastery by flying such maneuvers in competitions.
In this paper, we propose to learn a sensorimotor policy that enables an autonomous quadrotor to fly extreme acrobatic maneuvers with only onboard sensing and computation.
We train the policy entirely in simulation by leveraging demonstrations from an optimal controller that has access to privileged information. We use appropriate abstractions of the visual input to enable transfer to a real quadrotor.
We show that the resulting policy can be directly deployed in the physical world without any fine-tuning on real data.
Our methodology has several favorable properties: it does not require a human expert to provide demonstrations, it cannot harm the physical system during training, and it can be used to learn maneuvers that are challenging even for the best human pilots. 
Our approach enables a physical quadrotor to fly maneuvers such as the Power Loop, the Barrel Roll, and the Matty Flip, during which it incurs accelerations of up to 3g. 
\end{abstract}

\section*{Supplementary Material}
\rebutall{A video demonstrating acrobatic maneuvers is available at \url{https://youtu.be/2N_wKXQ6MXA}.
Code can be found at \\ \url{https://github.com/uzh-rpg/deep_drone_acrobatics}.
}

\section{Introduction}
Acrobatic flight with quadrotors is extremely challenging. Human drone pilots require many years of practice to safely master maneuvers such as power loops and barrel rolls\footnote{\href{https://www.youtube.com/watch?v=T1vzjPa5260}{https://www.youtube.com/watch?v=T1vzjPa5260}}.
Existing autonomous systems that perform agile maneuvers require external sensing and/or external computation~\cite{lupashin2010simple, abbeel2010autonomous,  bry2015aggressive}. For aerial vehicles that rely only on onboard sensing and computation, the high accelerations that are required for acrobatic maneuvers together with the unforgiving requirements on the control stack raise fundamental questions in both perception and control. Therefore, they provide a natural  benchmark to compare the capabilities of autonomous systems against trained human pilots.

Acrobatic maneuvers represent a challenge for the actuators, the sensors, and all physical components of a quadrotor. 
While hardware limitations can be resolved using expert-level equipment that allows for extreme accelerations, the major limiting factor to enable agile flight is reliable state estimation.
Vision-based state estimation systems either provide significantly reduced accuracy or completely fail at high accelerations due to effects such as motion blur, large displacements, and the difficulty of robustly tracking features over long time frames~\cite{Cadena2016}.
Additionally, the harsh requirements of fast and precise control at high speeds make it difficult to tune controllers on the real platform, since even tiny mistakes can result in catastrophic outcomes for the platform.

The difficulty of agile autonomous flight led previous work to mostly focus on specific aspects of the problem.
One line of research focused on the control problem, assuming near-perfect state estimation from external sensors~\cite{lupashin2010simple, abbeel2010autonomous, hwangbo2017control, bry2015aggressive}.
While these works showed impressive examples of agile flight, they focused purely on control. The issues of reliable perception and state estimation during agile maneuvers were cleverly circumvented by instrumenting the environment with sensors (such as Vicon and OptiTrack) that provide near-perfect state estimation to the platform at all times.
Recent works addressed the control and perception aspects in an integrated way via techniques like perception-guided trajectory optimization~\cite{falanga2018pampc,aggressive_falanga,Shen2013aggressive} or training end-to-end visuomotor agents~\cite{ZhangKLA16}. However, acrobatic performance of high-acceleration maneuvers with only onboard sensing and computation has not yet been achieved.

\rebutall{In this paper, we show for the first time that a vision-based autonomous quadrotor with only onboard sensing and computation is capable of autonomously performing agile maneuvers with accelerations of up to 3g, as shown in Fig.~\ref{fig:catcheye}.
This contribution is enabled by a novel simulation-to-reality transfer strategy, which is based on abstraction of both visual and inertial measurements.  
We demonstrate both formally and empirically that the presented abstraction strategy decreases the simulation-to-reality gap with respect to a naive use of sensory inputs.
Equipped with this strategy, we train an end-to-end sensimotor controller to fly acrobatic maneuvers \emph{exclusively} in simulation.
Learning agile maneuvers entirely in simulation has several advantages: (i)~Maneuvers can be simply specified by reference trajectories in simulation and do not require expensive demonstrations by a human pilot, 
(ii)~training is safe and does not pose any physical risk to the quadrotor, and
(iii)~the approach can scale to a large number of diverse maneuvers, including ones that can only be performed by the very best human pilots.
}

\rebutall{
Our sensorimotor policy is represented by a neural network that combines information from different input modalities to directly regress thrust and body rates. To cope with different output frequencies of the onboard sensors, we design an asynchronous network that operates independently of the sensor frequencies. This network is trained in simulation to imitate demonstrations from an optimal controller that has access to privileged state information.
}

We apply the presented approach to learning autonomous execution of three acrobatic maneuvers that are challenging even for expert human pilots: the Power Loop, the Barrel Roll, and the Matty Flip.
Through controlled experiments in simulation and on a real quadrotor, we show that the presented approach leads to robust and accurate policies that are able to reliably perform the maneuvers with only onboard sensing and computation.

\section{Related Work}
Acrobatic maneuvers comprehensively challenge perception and control stacks. The agility that is required to perform acrobatic maneuvers requires carefully tuned controllers together with accurate state estimation. Compounding the challenge, the large angular rates and high speeds that arise during the execution of a maneuver induce strong motion blur in vision sensors and thus compromise the quality of state estimation.

The complexity of the problem has led early works to only focus on the control aspect while disregarding the question of reliable perception.
Lupashin et al.~\cite{lupashin2010simple} proposed iterative learning of control strategies to enable platforms to perform multiple flips.
Mellinger et al.~\cite{mellinger2012trajectory} used a similar strategy to autonomously fly quadrotors through a tilted window~\cite{mellinger2012trajectory}.
By switching between two controller settings,  Chen et al.~\cite{chen2019controller} also demonstrated multi-flip maneuvers.
Abbeel~et~al.~\cite{abbeel2010autonomous} learned to perform a series of acrobatic maneuvers with autonomous helicopters. Their algorithm leverages expert pilot demonstrations to learn task-specific controllers.
While these works proved the ability of flying machines to perform agile maneuvers, they did not consider the perception problem. Indeed, they all assume that near-perfect state estimation is available during the maneuver, which in practice requires instrumenting the environment with dedicated sensors.

Aggressive flight with only onboard sensing and computation is an open problem.
The first attempts in this direction were made by Shen et al.~\cite{Shen2013aggressive}, who demonstrated agile vision-based flight. 
The work was limited to low-acceleration trajectories, therefore only accounting for part of the control and perception problems encountered at high speed.
More recently, Loianno et al.~\cite{loianno2016estimation} and Falanga et al.~\cite{aggressive_falanga} demonstrated aggressive flight through narrow gaps with only onboard sensing.
Even though these maneuvers are agile, they are very short and cannot be repeated without re-initializing the estimation pipeline.
Using perception-guided optimization, Falanga et al.~\cite{falanga2018pampc} and Lee et al.~\cite{lee2020aggressive} proposed a model-predictive control framework to plan aggressive trajectories while minimizing motion blur. However, such control strategies are too conservative to fly acrobatic maneuvers, which always induce motion blur. 

Abolishing the classic division between perception and control, a recent line of work proposes to train visuomotor policies directly from data.
Similarly to our approach, Zhang et al.~\cite{ZhangKLA16} trained a neural network from demonstrations provided by an MPC controller.
While the latter has access to the full state of the platform and knowledge of obstacle positions, the network only observes laser range finder readings and inertial measurements.
Similarly, Li et al.~\cite{li2019aggressive} proposed an imitation learning approach for training visuomotor agents for the task of quadrotor flight.
The main limitation of these methods is in their sample complexity: large amounts of demonstrations are required to fly even straight-line trajectories.
As a consequence, these methods were only validated in simulation or were constrained to slow hover-to-hover trajectories.

Our approach employs \emph{abstraction} of sensory input~\cite{muller2018driving} to reduce the problem's sample complexity and enable \emph{zero-shot sim-to-real transfer}.
\rebutall{While prior work has demonstrated the possibility of controlling real-world quadrotors with zero-shot sim-to-real transfer~\cite{loquercio2019deep, sadeghi2016cad2rl}, our approach is the first to learn an end-to-end sensorimotor mapping~-- from sensor measurements to low-level controls~-- that can perform high-speed and high-acceleration acrobatic maneuvers on a real physical system.}
\section{Overview}

In order to perform acrobatic maneuvers with a quadrotor, we train a sensorimotor controller to predict low-level actions from a history of onboard sensor measurements and a user-defined reference trajectory. 
An observation $\bm{o}[k] \in \mathbb{O}$ at time $k \in [0,\dots, T]$ consists of a camera image $\mathcal{I}[k]$  and an inertial measurement $\phi[k]$.
Since the camera and IMU typically operate at different frequencies, the visual and inertial observations are updated at different rates.
The controller's output is an action ${\bm{u}[k]= [c, \bm{\omega}^\top]^\top \in \mathbb{U}}$ that consists of continuous mass-normalized collective thrust $c$ and bodyrates ${\bm{\omega}}=[\omega_x, \omega_y, \omega_z]^\top$ that are defined in the quadrotor body frame.

The controller is trained via \emph{privileged learning}~\cite{chen2019learning}. Specifically, the policy is trained on demonstrations that are provided by a privileged expert: an optimal controller that has access to privileged information that is not available to the sensorimotor student, such as the full ground-truth state of the platform ${\bm{s}[k] \in \mathbb{S}}$.
The privileged expert is based on a classic optimization-based planning and control pipeline that tracks a reference trajectory from the state $\bm{s}[k]$ using MPC~\cite{falanga2018pampc}.

We collect training demonstrations from the privileged expert in simulation.
Training in simulation enables synthesis and use of unlimited training data for any desired trajectory, without putting the physical platform in danger. This includes maneuvers that stretch the abilities of even expert human pilots.
To facilitate zero-shot simulation-to-reality transfer, the sensorimotor student does not directly access raw sensor input such as color images. Rather, the sensorimotor controller acts on an \emph{abstraction} of the input, in the form of feature points extracted via classic computer vision. Such abstraction supports sample-efficient training, generalization, and simulation-to-reality transfer~\cite{muller2018driving,Zhou2019DoesCV}.

The trained sensorimotor student does not rely on any privileged information and can be deployed directly on the physical platform. We deploy the trained controller to perform acrobatic maneuvers in the physical world, with no adaptation required.

The next section presents each aspect of our method in detail.
\section{Method}

We define the task of flying acrobatic maneuvers with a quadrotor as a discrete-time, continuous-valued optimization problem.
Our task is to find an end-to-end control policy $\pi \colon  \mathbb{O} \to \mathbb{U}$, defined by a neural network, which minimizes the following finite-horizon objective:
\begin{equation}
    \min_{\pi}\\ J(\pi) = \mathbb{E}_{\rho(\pi)}  \Big[\sum_{k=0}^{k=T} \mathcal{C}(\bm{\tau}_r[k], \bm{s}[k]) \Big],
    \label{eq:prob}
\end{equation}
where $\mathcal{C}$ is a quadratic cost depending on a reference trajectory $\bm{\tau}_r[k]$ and the quadrotor state $\bm{s}[k]$, and $\rho(\pi)$ is the distribution of possible trajectories $\lbrace (\bm{s}[0], \bm{o}[0], \bm{u}[0]), \dots, (\bm{s}[T], \bm{o}[T], \bm{u}[T])\rbrace$ induced by the policy $\pi$.

We define the quadrotor state $\bm{s}[k] = [\bm{p},\bm{q},\bm{v},\bm{\omega}]$ as the platform position $\bm{p}$, its orientation quaternion $\bm{q}$, and their derivatives. Note that the agent $\pi$ does not directly observe the state $\bm{s}[k]$.
We further define the reference trajectory $\bm{\tau}_r[k]$ 
as a time sequence of reference states which describe the desired trajectory.
We formulate the cost $\mathcal{C}$ as
\begin{equation}
    \mathcal{C}(\bm{\tau}_r[k], \bm{s}[k]) =  \bm{x}[k]^\top\mathcal{L}\bm{x}[k],
    \label{eq:one_step_cost}
\end{equation}
where $\bm{x}[k] = \bm{\tau}_r[k] - \bm{s}[k]$ denotes the difference between the state of the platform and the corresponding reference at time $k$, and $\mathcal{L}$ is a positive-semidefinite cost matrix.

\begin{figure*}[t]
\centering
  \includegraphics[width=0.24\linewidth]{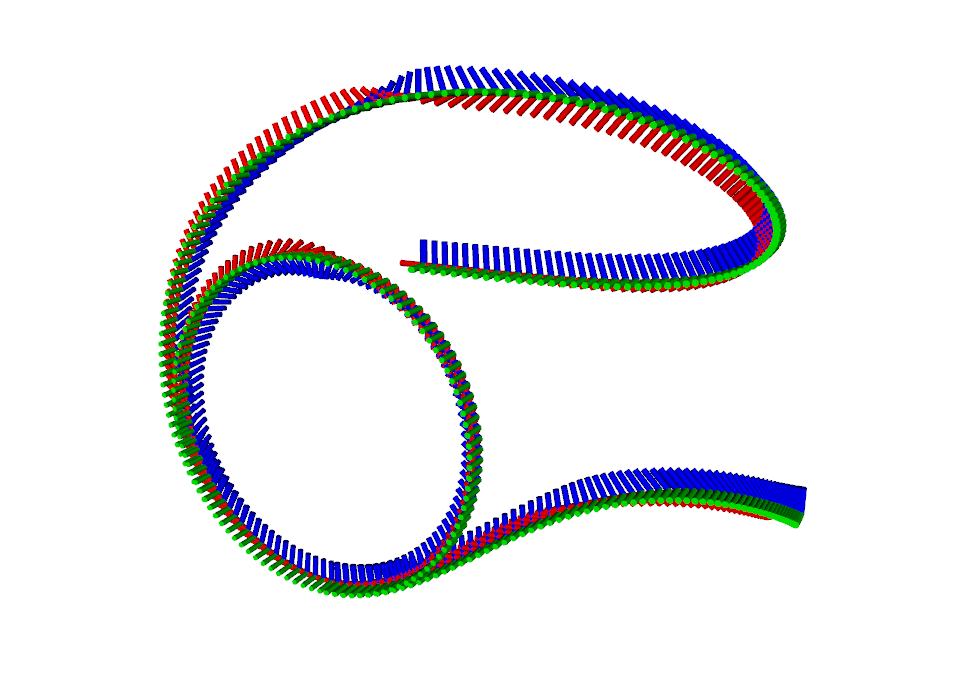}
  \includegraphics[width=0.24\textwidth]{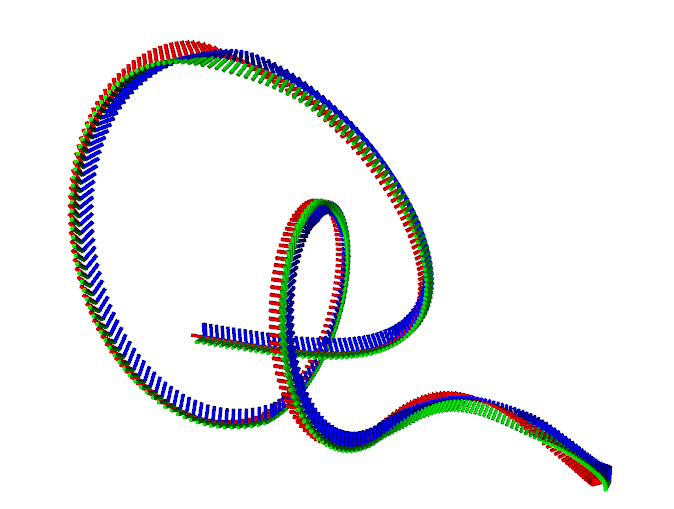}
  \includegraphics[width=0.24\textwidth]{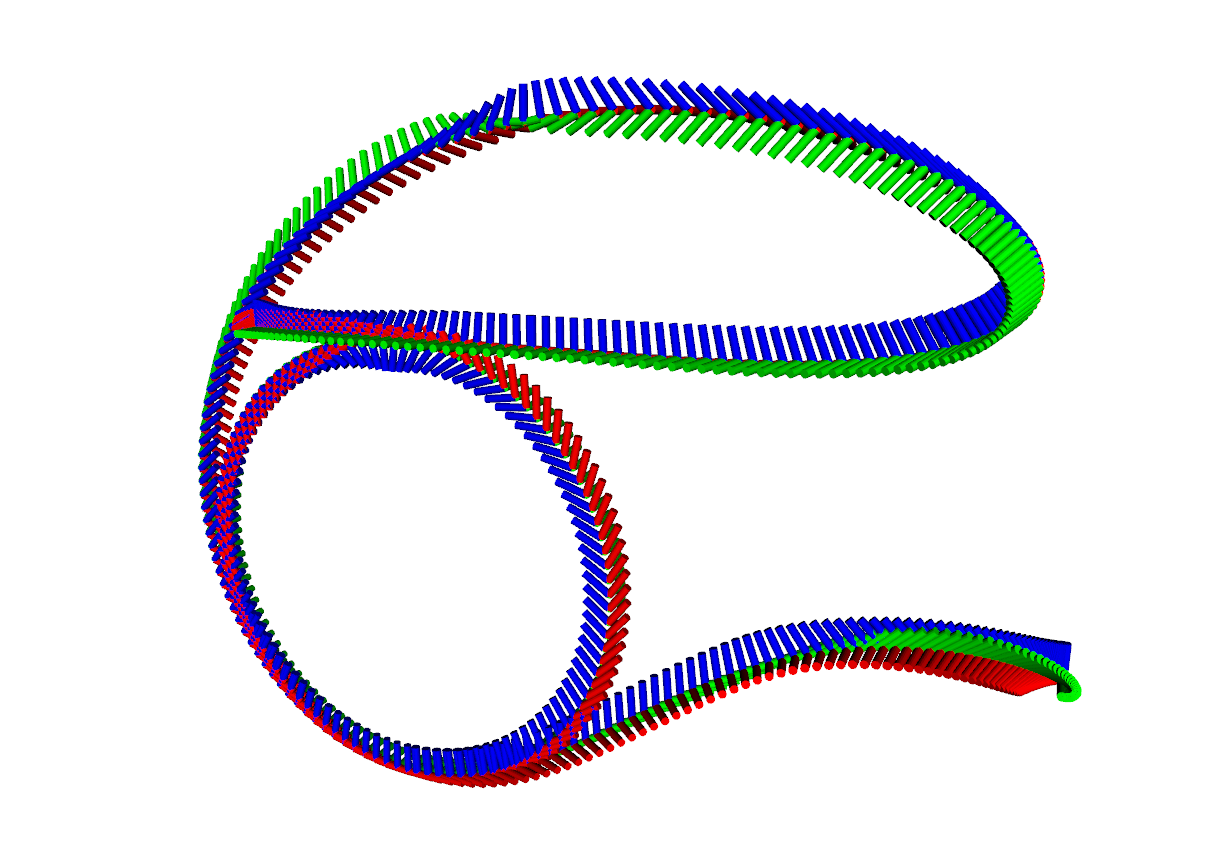}
  \includegraphics[width=0.68\textwidth]{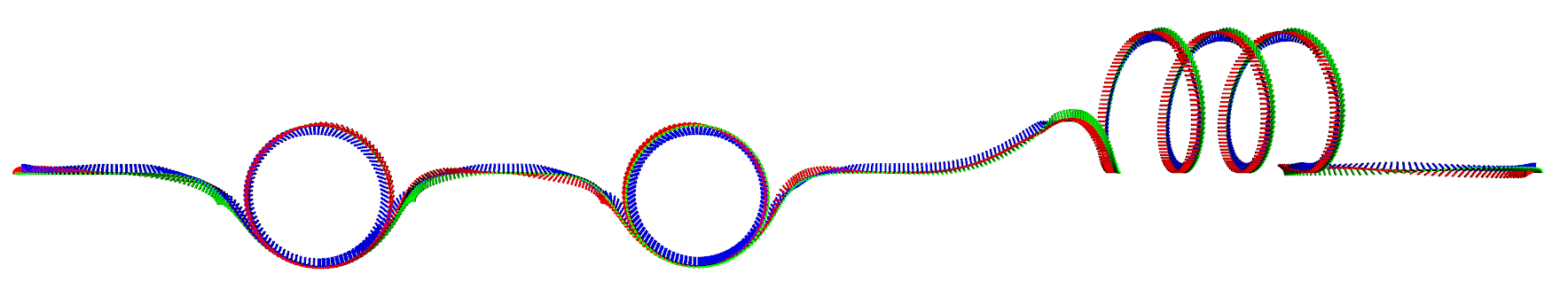}
\caption{Reference trajectories for acrobatic maneuvers. Top row, from left to right: Power Loop, Barrel Roll, and Matty Flip. Bottom row: Combo.}
\label{fig:maneuvers}
\end{figure*}

\subsection{Reference Trajectories}
Both the privileged expert and the learned policy assume access to a reference trajectory $\bm{\tau}_r[k]$ that specifies an acrobatic maneuver.
To ensure that such reference is dynamically feasible, it
has to satisfy constraints that are imposed by the physical limits and the underactuated nature of the quadrotor platform. 
Neglecting aerodynamic drag and motor dynamics, the dynamics of the quadrotor can be modelled as
\begin{equation}\label{eq:sys_dyn}
    \begin{aligned}
    \dot{\bm{p}}_{\text{WB}} &= \bm{v}_{\text{WB}} \\
    \dot{\bm{v}}_{\text{WB}} &= \prescript{}{{\text{W}}}{\bm{g}} + \bm{q}_{\text{WB}} \odot \bm{c}_{\text{B}} \\
    \dot{\bm{q}}_{\text{WB}} &= \frac{1}{2}\Lambda\left( \bm{\omega}_{\text{B}}\right)\cdot\bm{q}_{\text{WB}}\\
    \dot{\bm{\omega}_\text{B}} &= \bm{J}^{-1}\cdot \left(\bm{\eta} - \bm{\omega}_{\text{B}}\times\bm{J}\cdot \bm{\omega}_{\text{B}}\right) \; ,
    \end{aligned}
\end{equation}
where $\bm{p}_{\text{WB}}$, $\bm{v}_{\text{WB}}$, $\bm{q}_{\text{WB}}$  denote the position, linear velocity, and orientation of the platform body frame with respect to the world frame.
The gravity vector $\prescript{}{{\text{W}}}{\bm{g}}$ is expressed in the world frame and $\bm{q}_{\text{WB}}\odot\bm{c}_\text{B}$ denotes the rotation of the mass-normalized thrust vector $\bm{c}_{\text{B}} = (0, 0, c)^\top$ by quaternion $\bm{q}_{\text{WB}}$.
The time derivative of a quaternion $\bm{q} = (q_w, q_x, q_y, q_z)^\top$ is given by ${\dot{\bm{q}} = \frac{1}{2}\Lambda(\bm{\omega})\cdot \bm{q}}$ and $\Lambda(\bm{\omega})$ is a skew-symmetric matrix of the vector $(0, \bm{\omega}^\top)^\top = (0, \omega_x, \omega_y, \omega_z)^\top$.
The diagonal matrix ${\bm{J} = \text{diag}(J_{xx}, J_{yy}, J_{zz})}$ denotes the quadrotor inertia, and $\bm{\eta}\in\mathbb{R}^{3}$ are the torques acting on the body due to the motor thrusts. 

Instead of directly planning in the full state space, 
we plan reference trajectories in the space of \emph{flat outputs} ${\bm{z} = [x,y,z, \psi]^\top}$ proposed in~\cite{mellinger2011minimum}, where $x,y,z$ denote the position of the quadrotor and $\psi$ is the yaw angle.
It can be shown that any smooth trajectory in the space of flat outputs can be tracked by the underactuated platform (assuming reasonably bounded derivatives). 

The core part of each acrobatic maneuver is a circular motion primitive with constant tangential velocity $\bm{v}_l$.
The orientation of the quadrotor is constrained by the thrust vector the platform needs to produce. Consequently, the desired platform orientation is undefined when there is no thrust. 
To ensure a well-defined reference trajectory through the whole circular maneuver, we constrain the norm of the tangential velocity $\bm{v}_l$ to be larger by a margin $\varepsilon$ than the critical tangential velocity that would lead to free fall at the top of the maneuver:
\begin{equation}
    \Vert \bm{v}_l \Vert > \varepsilon \sqrt{rg},
\end{equation}
where $r$ denotes the radius of the loop, $g = \SI{9.81}{\meter\per\second\squared}$, and $\varepsilon = 1.1$.

While the circular motion primitives form the core part of the agile maneuvers, we use constrained polynomial trajectories to enter, transition between, and exit the maneuvers.
A polynomial trajectory is described by four polynomial functions specifying the independent evolution of the components of $\bm{z}$ over time:
\begin{equation}
    z_i(t) = \sum_{j=0}^{j=P_i}a_{ij}\cdot t^j \quad\text{for}\quad i\in \lbrace 0,1,2,3 \rbrace\;.
\end{equation}
We use polynomials of order $P_i = 7$ for the position components $(i=\lbrace 0, 1, 2\rbrace)$ of the flat outputs and $P_i=2$ for the yaw component $(i=3)$.
By enforcing continuity for both start ($t=0$) and end ($t=t_m$) of the trajectory down to the 3rd derivative of position, the trajectory is fully constrained.
We minimize the execution time $t_m$ of the polynomial trajectories, while constraining the maximum speed, thrust, and body rates throughout the maneuver. 

Finally, the trajectories are concatenated to the full reference trajectory, which is then converted back to the full state-space representation $\bm{\tau}_r(t)$~\cite{mellinger2011minimum}. 
Subsequent sampling with a frequency of $\SI{50}{\hertz}$ results in the discrete-time representation $\bm{\tau}_r[k]$ of the reference trajectory. Some example trajectories are illustrated in Figure~\ref{fig:maneuvers}.

\subsection{Privileged Expert}
\label{sec:mpc}

Our privileged expert $\pi^*$ consists of an MPC~\cite{falanga2018pampc} that generates collective thrust and body rates via an optimization-based scheme.
The controller operates on the simplified dynamical model of a quadrotor proposed in~\cite{mueller2013computationally}: 
\begin{equation}\label{eq:sys_dyn_mpc}
    \begin{aligned}
    \dot{\bm{p}}_{\text{WB}} &= \bm{v}_{\text{WB}} \\
    \dot{\bm{v}}_{\text{WB}} &= \prescript{}{{\text{W}}}{\bm{g}} + \bm{q}_{\text{WB}} \odot \bm{c}_{\text{B}} \\
    \dot{\bm{q}}_{\text{WB}} &= \frac{1}{2}\Lambda\left( \bm{\omega}_B\right)\cdot\bm{q}_{\text{WB}}
    \end{aligned}
\end{equation}
In contrast to the model \eqref{eq:sys_dyn}, the simplified model neglects the dynamics of the angular rates. 
The MPC repeatedly optimizes the open-loop control problem over a receding horizon of $N$ time steps and applies the first control command from the optimized sequence. 
Specifically, the action computed by the MPC is the first element of the solution to the following optimization problem: 
\begin{equation}\label{eq:mpc_criterion}
\begin{split}
    \pi^\ast = \min_{\bm{u}}&\biggl\lbrace \bm{x}[N]^\top\mathcal{Q}\bm{x}[N] \\
    &+ \sum_{k=1}^{N-1}\left(\bm{x}[k]^\top\mathcal{Q}\bm{x}[k] + \bm{u}[k]^\top\mathcal{R}\bm{u}[k] \right)\biggr\rbrace \\
    \text{s.t.} \quad &\bm{r}(\bm{x}, \bm{u}) = 0 \\
    &\bm{h}(\bm{x}, \bm{u}) \leq 0    , 
\end{split}
\end{equation}
where $\bm{x}[k] = \bm{\tau}_r[k] - \bm{s}[k]$ denotes the difference between the state of the platform at time $k$ and the corresponding reference $\bm{\tau}_r[k]$, $\bm{r}(\bm{x}, \bm{u})$ are equality constraints imposed by the system dynamics \eqref{eq:sys_dyn_mpc}, and $\bm{h}(\bm{x}, \bm{u})$ are optional bounds on inputs and states. $\mathcal{Q},\mathcal{R}$ are positive-semidefinite cost matrices.

\begin{figure*}[t]
    \centering
    \includegraphics[width=1.0\textwidth]{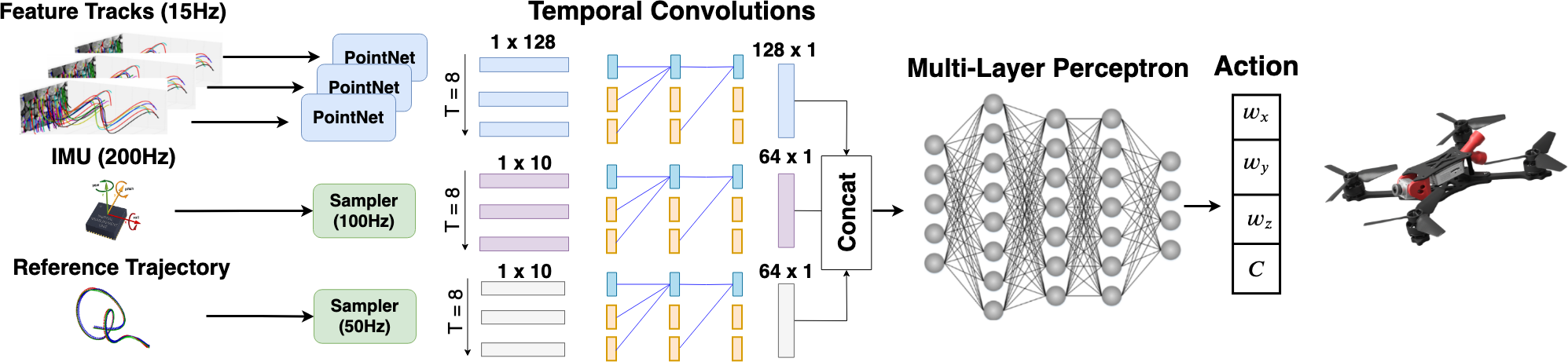}
    \caption{Network architecture. The network receives a history of feature tracks, IMU measurements, and reference trajectories as input. Each input modality is processed using temporal convolutions and updated at different input rates. The resulting intermediate representations are processed by a multi-layer perceptron at a fixed output rate to produce collective thrust and body rate commands.}
    \label{fig:network}
\end{figure*}

\subsection{Learning}

The sensorimotor controller is trained by imitating demonstrations provided by the privileged expert. While the expert has access to privileged information in the form of ground-truth state estimates, the sensorimotor controller does not access any privileged information and can be directly deployed in the physical world~\cite{chen2019learning}.

A lemma by Pan et al.~\cite{pan2018agile} formally defines an upper bound between the expert and the student performance as
\begin{align}
    J(\pi) - J(\pi^*) & \leq C_{\pi^*} \mathbb{E}_{\rho(\pi)} \big[ DW(\pi, \pi^*) \big] \nonumber\\
                     & \leq C_{\pi^*} \mathbb{E}_{\rho(\pi)} \mathbb{E}_{\small \bm{u}^* \sim \pi^*} \mathbb{E}_{\small \bm{u} \sim \pi} [\left\lVert \bm{u}^* - \bm{u} \right\rVert],
    \label{eq:supervised_learning}
\end{align}
where $DW(\cdot,\cdot)$ is the Wasserstein metric~\cite{gibbs2002choosing} and $C_{\pi^*}$ is a constant depending on the smoothness of expert actions.
Finding an agent $\pi$ with the same performance as the privileged controller $\pi^*$ boils down to minimizing the discrepancy in actions between the two policies on the expected agent trajectories $\rho(\pi)$.

The aforementioned discrepancy can be minimized by an iterative supervised learning process known as DAGGER~\cite{dagger}.
This process iteratively collects data by letting the student control the platform, annotating the collected observations with the experts' actions, and updating the student controller based on the supervised learning problem
\begin{equation}
 \pi = \min_{\hat{\pi}} \mathbb{E}_{\small \bm{s}[k] \sim \rho(\pi)}[\left\lVert \bm{u}^*(\bm{s}[k]) - \hat{\pi}(\bm{o}[k]) \right\rVert],
\end{equation}
 where $\bm{u}^*(\bm{s}[k])$ is the expert action and $\bm{o}[k]$ is the observation vector in the state $\bm{s}[k]$.
Running this process for $O(N)$ iterations yields a policy $\pi$ with performance $J{(\pi) \leq J(\pi^*) + O(N)}$~\cite{dagger}.

Naive application of this algorithm to the problem of agile flight in the physical world presents two major challenges: how can the expert access the ground-truth state $\bm{s}[k]$ and how can we protect the platform from damage when the partially trained student $\pi$ is in control?
To circumvent these challenges, we train \emph{exclusively} in simulation.
This significantly simplifies the training procedure, but presents a new hurdle: how do we minimize the difference between the sensory input received by the controller in simulation and reality?

Our approach to bridging the gap between simulation and reality is to leverage \emph{abstraction}~\cite{muller2018driving}. Rather than operating on raw sensory input, our sensorimotor controller operates on an intermediate representation produced by a perception module~\cite{Zhou2019DoesCV}. This intermediate representation is more consistent across simulation and reality than raw visual input.

We now formally show that training a network on abstractions of sensory input reduces the gap between simulation and reality.
Let $M(\bm{z}\mid \bm{s}),L(\bm{z}\mid \bm{s}) \colon \mathbb{S} \to \mathbb{O}$ denote the observation models in the real world and in simulation, respectively.
Such models describe how an on-board sensor measurement $\bm{z}$ senses a state $\bm{s}$.
We further define $\pi_r = \mathbb{E}_{\bm{o}_r \sim M(\bm{s})} [\pi(\bm{o}_r[k])]$ and $\pi_{s} = \mathbb{E}_{\bm{o}_s \sim L(\bm{s})} [\pi(\bm{o}_s[k])]$ as the realizations of the policy $\pi$ in the real world and in simulation.
The following lemma shows that, disregarding actuation differences, the distance between the observation models upper-bounds the gap in performance in simulation and reality.
\begin{lemma}
For a Lipschitz-continuous policy $\pi$ the simulation-to-reality gap $J(\pi_r) - J(\pi_s)$ is upper-bounded by
\begin{equation}
    J(\pi_r) - J(\pi_s) \leq C_{\pi_s} K \mathbb{E}_{\rho(\pi_r)} \big[ DW(M, L) \big],
    \label{eq:sim2real}
\end{equation}
where $K$ denotes the Lipschitz constant.
\label{lemma:gap}
\end{lemma}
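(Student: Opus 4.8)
The plan is to reduce the statement to the performance bound already recorded in \eqref{eq:supervised_learning} by regarding the two sensor-conditioned realizations $\pi_r$ and $\pi_s$ of the \emph{same} network $\pi$ as a pair of distinct policies. Concretely, I would instantiate the Pan et al. bound with the real-world realization $\pi_r$ playing the role of the learned agent and the simulation realization $\pi_s$ playing the role of the reference. This immediately yields
\begin{equation*}
    J(\pi_r) - J(\pi_s) \leq C_{\pi_s} \mathbb{E}_{\rho(\pi_r)} \big[ DW(\pi_r, \pi_s) \big],
\end{equation*}
so that the constant $C_{\pi_s}$ and the outer expectation over $\rho(\pi_r)$ appear exactly as in the claim. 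It then remains only to convert the Wasserstein distance between the two \emph{action} distributions $\pi_r, \pi_s$ into the Wasserstein distance between the two \emph{observation} models $M$ and $L$.

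The key step is to observe that, for each fixed state $\bm{s}$, the action laws $\pi_r$ and $\pi_s$ are the pushforwards of the observation laws $M(\cdot \mid \bm{s})$ and $L(\cdot \mid \bm{s})$ under the common map $\pi$. I would then invoke the standard contraction property of the Wasserstein metric under a Lipschitz map: if $f$ is $K$-Lipschitz, then $DW(f_\# \mu, f_\# \nu) \leq K\, DW(\mu, \nu)$ for any measures $\mu, \nu$. Applied with $f = \pi$, $\mu = M$, $\nu = L$, this gives $DW(\pi_r, \pi_s) \leq K\, DW(M, L)$ pointwise in the state. Substituting into the inequality above, and pulling the deterministic factor $K$ outside the expectation, produces the claimed bound \eqref{eq:sim2real}.

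To justify the contraction cleanly I would argue by coupling: take an optimal transport plan $\gamma \in \Gamma(M, L)$ achieving $DW(M, L)$, push it forward through $\pi \times \pi$ to obtain a feasible coupling of $\pi_\# M$ and $\pi_\# L$, and bound the resulting transport cost using the Lipschitz estimate $\lVert \pi(\bm{o}_r) - \pi(\bm{o}_s) \rVert \leq K \lVert \bm{o}_r - \bm{o}_s \rVert$; since this coupling need not be optimal, the bound is one-sided in the direction we want. The main obstacle I anticipate is notational rather than analytic: the symbols $\pi_r$ and $\pi_s$ are written as expectations of the policy output, so I must first make precise that the objects entering the Wasserstein metric are the induced action \emph{distributions} (the pushforwards), not their means, and confirm that the single Lipschitz constant $K$ of $\pi$ governs both realizations. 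Once this identification is fixed, the two displayed inequalities chain together and the remaining steps are routine.
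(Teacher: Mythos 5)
Your proposal is correct and follows essentially the same route as the paper: the paper likewise instantiates the bound \eqref{eq:supervised_learning} with $\pi_r$ as the agent and $\pi_s$ as the reference, and then shows $DW(\pi_r,\pi_s) \leq K\, DW(M,L)$ by writing the action-space Wasserstein distance as an infimum over couplings of $(\bm{o}_r,\bm{o}_s)$ and applying the Lipschitz estimate. Your explicit pushforward-coupling argument is in fact a slightly more careful justification of the paper's first displayed line (which the paper asserts as an equality without spelling out the coupling identification).
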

\begin{proof}
The lemma follows directly from~\eqref{eq:supervised_learning} and the fact that
\begin{align}
     DW(\pi_r, \pi_s) &= \inf_{\gamma \in \Pi(\bm{o}_r, \bm{o}_s)} \mathbb{E}_{(\bm{o}_r, \bm{o}_s)}[d_p(\pi_r, \pi_s)] \nonumber\\
                      & \leq  K \inf_{\gamma \in \Pi(\bm{o}_r, \bm{o}_s)}\mathbb{E}_{(\bm{o}_r, \bm{o}_s)}[d_o(\bm{o}_r, \bm{o}_s)] \nonumber\\
                      &=  K \cdot DW(M, L),\nonumber
\end{align}
where $d_o$ and $d_p$ are distances in observation and action space, respectively.
\end{proof}

We now consider the effect of abstraction of the input observations. Let $f$ be a mapping of the observations such that 
\begin{align}
DW(f(M), f(L)) \leq  DW(M, L).
\label{eq:abstract_observations}
\end{align}
\rebutall{The mapping $f$ is task-dependent and is generally designed~-- with domain knowledge~-- to contain sufficient information to solve the task while being invariant to nuisance factors.
In our case, we use feature tracks as an abstraction of camera frames. The feature tracks are provided by a visual-inertial odometry (VIO) system.
In contrast to camera frames, feature tracks primarily depend on scene geometry, rather than surface appearance.
We also make inertial measurements independent of environmental conditions, such as temperature and pressure, by integration and de-biasing.
As such, our input representations fulfill the requirements of Eq.~\eqref{eq:abstract_observations}.
}

As the following lemma shows, training on such representations reduces the gap between task performance in simulation and the real world. 
\begin{lemma}
A policy that acts on an abstract representation of the observations $\pi_f \colon f(\mathbb{O}) \to \mathbb{U}$ has a lower simulation-to-reality gap than a policy $\pi_o \colon \mathbb{O} \to \mathbb{U}$ that acts on raw observations.
\label{lemma:abstraction}
\end{lemma}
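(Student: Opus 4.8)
The plan is to apply Lemma~\ref{lemma:gap} separately to the two policies and then to compare the resulting bounds through the abstraction inequality~\eqref{eq:abstract_observations}.

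First I would invoke Lemma~\ref{lemma:gap} for the raw policy $\pi_o$, whose observations are drawn from the models $M$ and $L$. Denoting by $\pi_{o,r}$ and $\pi_{o,s}$ its real-world and simulated realizations, the lemma gives
\begin{equation*}
J(\pi_{o,r}) - J(\pi_{o,s}) \leq C_{\pi_{o,s}} K_o\, \mathbb{E}_{\rho(\pi_{o,r})} \big[ DW(M, L) \big].
\end{equation*}

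Next I would apply the same lemma to the abstract policy $\pi_f$. The crucial point is that $\pi_f$ never sees $M$ or $L$ directly: the abstraction $f$ is interposed between the sensor and the network, so the effective observation models are $f(M)$ and $f(L)$. Substituting these into the lemma yields
\begin{equation*}
J(\pi_{f,r}) - J(\pi_{f,s}) \leq C_{\pi_{f,s}} K_f\, \mathbb{E}_{\rho(\pi_{f,r})} \big[ DW(f(M), f(L)) \big].
\end{equation*}
I would then chain the abstraction property~\eqref{eq:abstract_observations}, $DW(f(M), f(L)) \leq DW(M, L)$, so that the distributional term appearing in the $\pi_f$ bound is pointwise no larger than the one in the $\pi_o$ bound. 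Concluding that the certified gap of $\pi_f$ is at most that of $\pi_o$ is then the asserted result.

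The obstacle I expect is interpretive rather than computational. Lemma~\ref{lemma:gap} furnishes only an upper bound on each gap, and from $a \le b$ and $c \le b'$ with $b' \le b$ one cannot in general order $a$ and $c$; the honest reading of the claim is therefore a comparison between the \emph{guaranteed} bounds. Even that comparison requires the prefactors to match: one needs $C_{\pi_{f,s}} K_f \le C_{\pi_{o,s}} K_o$, and since $\pi_o$ and $\pi_f$ live on different domains ($\mathbb{O}$ versus $f(\mathbb{O})$) their smoothness and Lipschitz constants need not coincide a priori. I would therefore either state the equality of these constants as an explicit hypothesis, or argue that $f$ can be chosen so as not to inflate the Lipschitz constant of the composite policy. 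This constant-matching step is the single place where the argument needs genuine care; the rest is a direct substitution into Lemma~\ref{lemma:gap}.
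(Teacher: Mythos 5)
Your argument is precisely the paper's: its entire proof of this lemma is the one-line remark that the result ``follows directly from~\eqref{eq:sim2real} and~\eqref{eq:abstract_observations}'', i.e., substituting the abstracted observation models $f(M)$, $f(L)$ into the bound of Lemma~\ref{lemma:gap} and chaining $DW(f(M), f(L)) \leq DW(M,L)$, exactly as you do. The caveats you flag~-- that only the certified upper bounds are being ordered, and that the prefactors $C_{\pi_s}K$ must implicitly be taken comparable across the two policies~-- are silently assumed by the paper as well, so your reading is if anything a more honest account of what the lemma actually establishes.
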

\begin{proof}
The lemma follows directly from~\eqref{eq:sim2real} and~\eqref{eq:abstract_observations}.
\end{proof}

\subsection{Sensorimotor Controller}
\label{sec:agent}

In contrast to the expert policy $\pi^{*}$, the student policy $\pi$ is only provided with onboard sensor measurements from the forward-facing camera and the IMU.
There are three main challenges for the controller to tackle: (i) it should keep track of its state based on the provided inputs, akin to a visual-inertial odometry system~\cite{svo_gtsam, engel2018direct}, (ii) it should be invariant to environments and domains, so as to not require retraining for each scene, and (iii) it should process sensor readings that are provided at different frequencies.

We represent the policy as a neural network that fulfills all of the above requirements. The network consists of three input branches that process visual input, inertial measurements, and the reference trajectory, followed by a multi-layer perceptron that produces actions. The architecture is illustrated in Fig.~\ref{fig:network}.
Similarly to visual-inertial odometry systems~\cite{clark2017vinet, engel2018direct, svo_gtsam}, we provide the network with a representation of the platform state by supplying it with a history of length $L=8$ of visual and inertial information.

To ensure that the learned policies are scene- and domain-independent, we provide the network with appropriate abstractions of the inputs instead of directly feeding raw inputs. We design these abstractions to contain sufficient information to solve the task while being invariant to environmental factors that are hard to simulate accurately and are thus unlikely to transfer from simulation to reality.

The distribution of raw IMU measurements depends on the exact sensor as well as environmental factors such as pressure and temperature.
Instead of using the raw measurements as input to the policy, we preprocess the IMU signal by applying bias subtraction and gravity alignment.
Modern visual-inertial odometry systems perform a similar pre-integration of the inertial data in their optimization backend \cite{qin2018vins}.
The resulting inertial signal contains the estimated platform velocity, orientation, and angular rate.

\begin{table*}[htb!]
  \robustify\bfseries
  \newcolumntype{R}[0]{S[table-format=3(3),table-number-alignment = center,separate-uncertainty = true,detect-weight,detect-mode]}
    \centering
    \small
    \setlength{\tabcolsep}{3pt}
    \begin{tabular}{l|ccc|Rc|Rc|Rc|Rc}
    \toprule
         Maneuver &
         \multicolumn{3}{c|}{Input} & \multicolumn{2}{c}{Power Loop} & \multicolumn{2}{c}{Barrel Roll} & \multicolumn{2}{c}{Matty Flip} & \multicolumn{2}{c}{Combo} \\
         \cmidrule(l){2-4} \cmidrule(l){5-12}
      & Ref & IMU & FT   &  {Error ($\downarrow$)} & Success ($\uparrow$) &   {Error ($\downarrow$)} & Success ($\uparrow$) &  {Error ($\downarrow$)} & Success ($\uparrow$) &  {Error ($\downarrow$)} & Success ($\uparrow$) \\
        \midrule
     VIO-MPC & \checkmark & \checkmark & \checkmark& 43(14) & \textbf{100\%} & 79(43) & \textbf{100\%} & 92(41) & \textbf{100\%} & 164(51) & 70\% \\
     Ours (Only Ref) & \checkmark & &  & 250(50) & 20\% & 485(112) & 85\%  & 340(120) & 15\% & $\infty$ & 0 \% \\
     Ours (No IMU) & \checkmark & & \checkmark & 210(100) & 30\% & 543(95) & 85\%  & 380(100) & 20\% & $\infty$ & 0 \%\\
     Ours (No FT) & \checkmark & \checkmark &  & 28(8) & \textbf{100\%} & 64(24) & 95\% & 67(29) & \textbf{100\%} &  134(113) & 85\%  \\
     Ours & \checkmark & \checkmark & \checkmark & \bfseries 24(5) & \textbf{100\%} & \bfseries 58(9) & \textbf{100\%} & \bfseries 53(15) & \textbf{100\%} &  \bfseries 128(57) & \textbf{95\%}  \\
           \bottomrule
    \end{tabular}
    \caption{Comparison of different variants of our approach with the baseline (VIO-MPC) in terms of the average tracking error in centimeters and the success rate. Results were averaged over 20 runs. Agents without access to IMU data perform poorly. An agent that has access only to IMU measurements has a significantly lower tracking error than the baseline. Adding feature tracks further improves tracking performance and success rate, especially for longer and more complicated maneuvers.}
    \label{tab:sim_results}
\end{table*}

We use the history of filtered inertial measurements, sampled at $\SI{100}{\hertz}$, and process them
using temporal convolutions~\cite{bai2018empirical}.
Specifically, the inertial branch consists of a temporal convolutional layer with 128 filters, followed by three temporal convolutions with 64 filters each. A final fully-connected layer maps the signal to a 128-dimensional representation.

Another input branch processes a history of reference velocities, orientations, and angular rates. It has the same structure as the inertial branch.
New reference states are added to the history at a rate of $\SI{50}{\hertz}$.

For the visual signal, we use \emph{feature tracks}, i.e.\ the motion of salient keypoints in the image plane, as an abstraction of the input.
Feature tracks depend on the scene structure, ego-motion, and image gradients, but not on absolute pixel intensities.
At the same time, the information contained in the feature tracks is sufficient to infer the ego-motion of the platform up to an unknown scale. Information about the scale can be recovered from the inertial measurements.
We leverage the computationally efficient feature extraction and tracking frontend of VINS-Mono~\cite{qin2018vins} to generate feature tracks.
The frontend extracts Harris corners~\cite{harris1988combined} and tracks them using the Lucas-Kanade method~\cite{lucas1981iterative}.
We perform geometric verification and exclude correspondences with a distance of more than $2$ pixels from the epipolar line.
We represent each feature track by a 5-dimensional vector that consists of the keypoint position, its displacement with respect to the previous keyframe (both on the rectified image plane), and the number of keyframes that the feature has been tracked (a measure of keypoint quality).

To facilitate efficient batch training, we randomly sample $40$ keypoints per keyframe.
The features are processed by a reduced version of the PointNet architecture proposed in~\cite{ranftl2018deep} before we generate a fixed-size representation at each timestep.
Specifically, we reduce the number of hidden layers from 6 to 4, with $32, 64, 128, 128$ filters, respectively, in order to reduce latency.
The output of this subnetwork is reduced to a $128$-dimensional vector by global average pooling over the feature tracks.
The history of resulting hidden representations is then processed by a temporal convolutional network that has the same structure as the inertial and reference branches.

Finally, the outputs of the individual branches are concatenated and processed by a synchronous multi-layer perceptron with three hidden layers of size $128, 64, 32$.
The final outputs are the body rates and collective thrust that are used to control the platform.

We account for the different input frequencies by allowing each of the input branches to operate asynchronously.
Each branch operates independently from the others by generating an output only when a new input from the sensor arrives.
The multi-layer perceptron uses the latest outputs from the asynchronous branches and operates at $\SI{100}{\hertz}$. It outputs control commands at approximately the same rate due to its minimal computational overhead.

\subsection{Implementation Details}
\label{sec:implementation}

We use the Gazebo simulator to train our policies.
Gazebo can model the physics of quadrotors with high fidelity using the RotorS extension~\cite{Furrer2016}. 
We simulate the AscTec Hummingbird multirotor, which is equipped with a forward-facing fisheye camera. 
The platform is instantiated in a cubical simulated flying space with a side length of 70 meters. An example image is shown in Fig.~\ref{fig:sim2real_fig} (left).

\begin{figure}[t]
    \centering
    \includegraphics[width=0.49\linewidth]{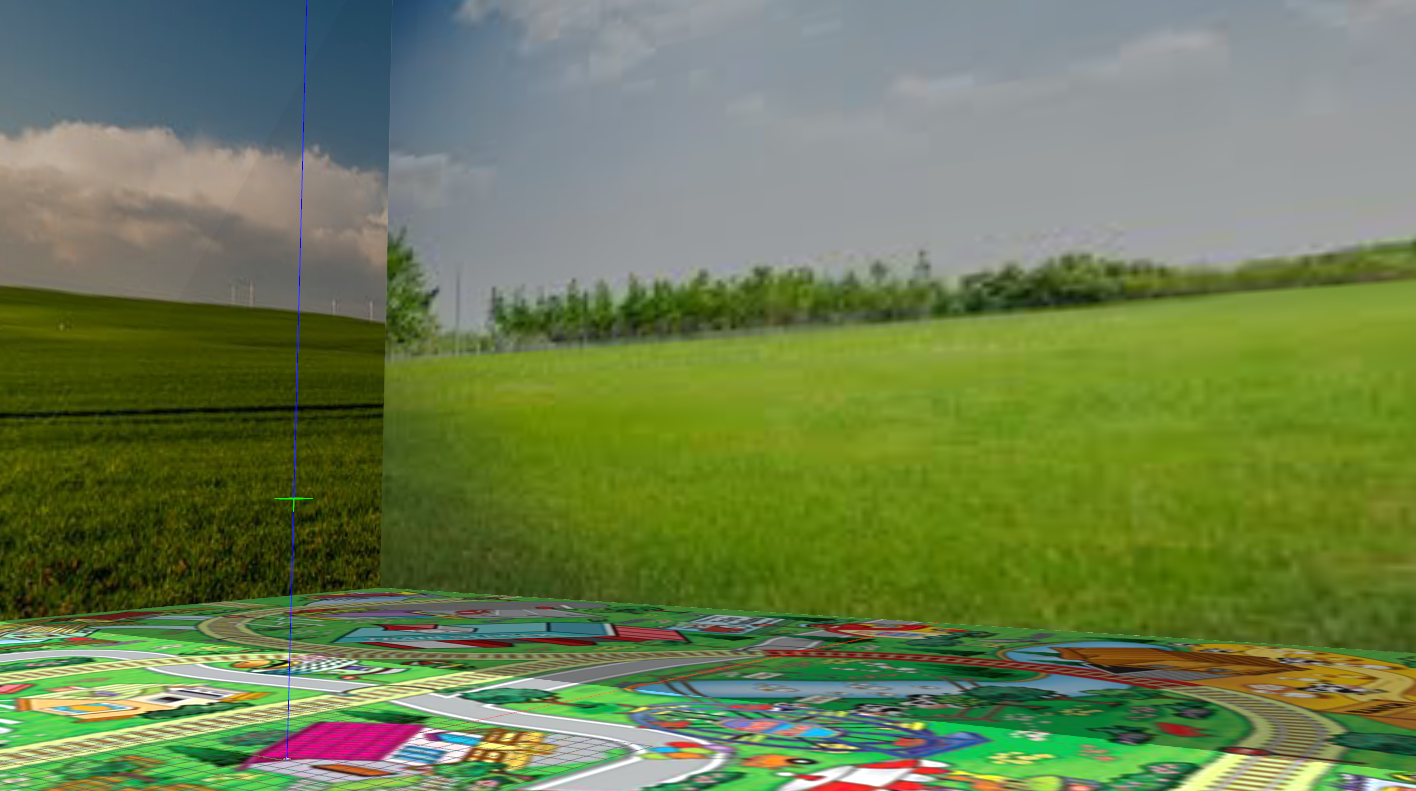}
    \includegraphics[width=0.49\linewidth]{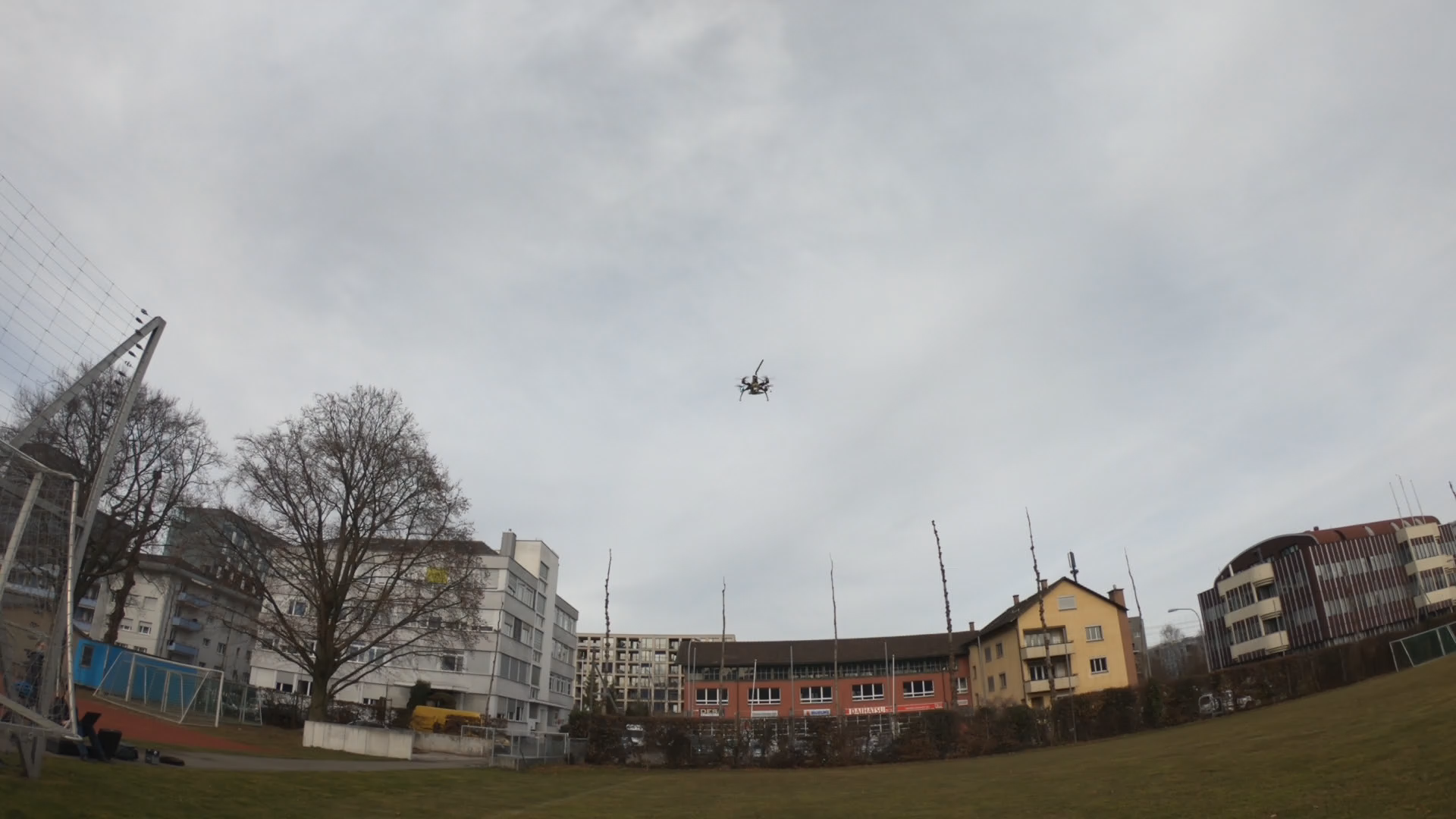}
    \caption{Example images from simulation (left) and the real test environment (right).}
    \label{fig:sim2real_fig}
    \vspace{-4mm}
\end{figure}

For the real-world experiments we use a custom quadrotor that weighs $\SI{1.15}{\kilo\gram}$ and has a thrust-to-weight ratio of 4:1. We use a Jetson TX2 for neural network inference. Images and inertial measurements are provided by an Intel RealSense~T265 camera.

We use an off-policy learning approach. We execute the trained policy, collect rollouts, and add them to a dataset. After 30 new rollouts are added, we train for 40 epochs on the entire dataset. This collect-rollouts-and-train procedure is repeated 5 times: there are 150 rollouts in the dataset by the end. We use the Adam optimizer~\cite{KingmaB14} with a learning rate of $3e-4$. We always use the latest available model for collecting rollouts. 
We execute a student action only if the difference to the expert action is smaller than a threshold $t=1.0$ to avoid crashes in the early stages of training.
We double the threshold $t$ every 30 rollouts.
We perform a random action with a probability $p=30\%$ at every stage of the training to increase the coverage of the state space.
To facilitate transfer from simulation to reality, we randomize the IMU biases and the thrust-to-weight ratio of the platform by up to $10\%$ of their nominal value in every iteration.
We do not perform any randomization of the geometry and appearance of the scene during data collection.
\begin{figure*}[t]
    \centering
    \hspace{-10pt}
    \begin{subfigure}{0.45\linewidth}
        \input{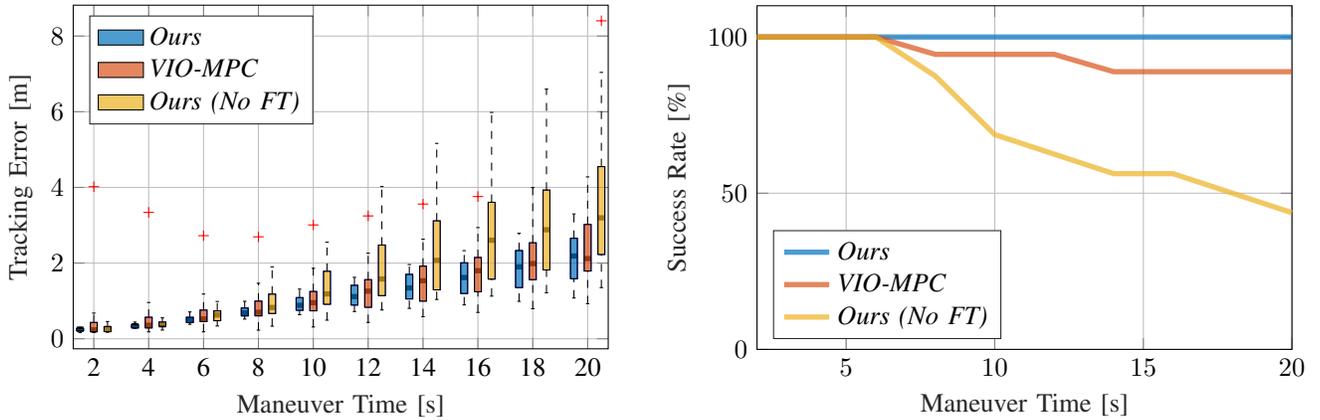}
    \end{subfigure}
    \hspace{10pt}
    \begin{subfigure}{0.45\linewidth}
%
%
\definecolor{mycolor1}{rgb}{0.00000,0.44700,0.74100}%
\definecolor{mycolor2}{rgb}{0.85000,0.32500,0.09800}%
\definecolor{mycolor3}{rgb}{0.92900,0.69400,0.12500}%
\begin{tikzpicture}

\begin{axis}[%
width=2.8in,
height=1.8in,
at={(0.0in,0.0in)},
scale only axis,
xmin=2,
xmax=20,
xlabel style={font=\color{white!15!black}},
xlabel={Maneuver Time [s]},
ymin=0,
ymax=110,
ylabel style={font=\color{white!15!black}},
ylabel={Success Rate [\%]},
axis background/.style={fill=white},
xmajorgrids,
ymajorgrids,
legend style={at={(0.03,0.03)}, anchor=south west, legend cell align=left, align=left, draw=white!15!black}
]
\addplot [color=mycolor1, line width=2.0pt, opacity=0.7]
  table[row sep=crcr]{%
2	100\\
4	100\\
6	100\\
8	100\\
10	100\\
12	100\\
14	100\\
16	100\\
18	100\\
20	100\\
};
\addlegendentry{\textit{Ours}}

\addplot [color=mycolor2, line width=2.0pt, opacity=0.7]
  table[row sep=crcr]{%
2	100\\
4	100\\
6	100\\
8	94.4444444444444\\
10	94.4444444444444\\
12	94.4444444444444\\
14	88.8888888888889\\
16	88.8888888888889\\
18	88.8888888888889\\
20	88.8888888888889\\
};
\addlegendentry{\textit{VIO-MPC}}

\addplot [color=mycolor3, line width=2.0pt, opacity=0.7]
  table[row sep=crcr]{%
2	100\\
4	100\\
6	100\\
8	87.5\\
10	68.75\\
12	62.5\\
14	56.25\\
16	56.25\\
18	50\\
20	43.75\\
};
\addlegendentry{\textit{Ours (No FT)}}

\end{axis}

\end{tikzpicture}%
    \end{subfigure}
    \caption{Tracking error (left) and success rate (right) over time when a maneuver is executed repeatedly in simulation. The controllers were trained to complete the maneuver for six seconds and generalize well to longer sequences. Our learned controller, which leverages both IMU and visual data, provides consistently good performance without a single failure.
    }
    \label{fig:man_len_generalization}
\end{figure*}

\section{Experiments}

We design our evaluation procedure to address the following questions. Is the presented sensorimotor controller advantageous to a standard decomposition of state estimation and control? What is the role of input abstraction in facilitating transfer from simulation to reality?
Finally, we validate our design choices with ablation studies.

\subsection{Experimental Setup}
We learn sensorimotor policies for three acrobatic maneuvers that are popular among professional drone pilots as well as a policy that consists of a sequence of multiple maneuvers. %
\begin{enumerate}
    \item Power Loop: Accelerate over a distance of $\SI{4}{\meter}$ to a speed of $\SI{4.5}{\meter\per\second}$ and enter a loop maneuver with a radius of ${r=\SI{1.5}{\meter}}$. 
    \item Barrel Roll: Accelerate over a distance of $\SI{3}{\meter}$ to a speed of $\SI{4.5}{\meter\per\second}$ and enter a roll maneuver with a radius of ${r=\SI{1.5}{\meter}}$.
    \item Matty Flip: Accelerate over a distance of $\SI{4}{\meter}$ to a speed of $\SI{4.5}{\meter\per\second}$ while yawing $\SI{180}{\degree}$ and enter a backwards loop maneuver with a radius of ${r=\SI{1.5}{\meter}}$. 
    \item Combo: This sequence starts with a triple Barrel Roll, followed by a double Power Loop, and ends with a Matty Flip. The full maneuver is executed without stopping between maneuvers.
\end{enumerate}
The maneuvers are listed by increasing difficulty. The trajectories of these maneuvers are illustrated in Fig.~\ref{fig:maneuvers}. They contain high accelerations and fast angular velocities around the body axes of the platform. All maneuvers start and end in the hover condition.

For comparison, we construct a strong baseline by combining visual-inertial odometry~\cite{qin2018vins} and model predictive control~\cite{falanga2018pampc}. Our baseline receives the same inputs as the learned controllers: inertial measurements, camera images, and a reference trajectory.

We define two metrics to compare different approaches. We measure the average root mean square error in meters of the reference position with respect to the true position of the platform during the execution of the maneuver. Note that we can measure this error only for simulation experiments, as it requires exact state estimation. We thus define a second metric, the average success rate for completing a maneuver. In simulation, we define success as not crashing the platform into any obstacles during the maneuver. For the real-world experiments, we consider a maneuver successful if the safety pilot did not have to intervene during the execution and the maneuver was executed correctly.

\subsection{Experiments in Simulation}
We first evaluate the performance for individual maneuvers in simulation.
The results are summarized in Table~\ref{tab:sim_results}.
The learned sensorimotor controller that has access to both visual and inertial data (\textit{Ours}) is consistently the best across all maneuvers.
This policy exhibits a lower tracking error by up to 45\% in comparison to the strong \textit{VIO-MPC} baseline.
The baseline can complete the simpler maneuvers with perfect success rate, but generally has higher tracking error due to drift in state estimation. 
The gap between the baseline and our controller widens for longer and more difficult sequences.

Table~\ref{tab:sim_results} also highlights the relative importance of the input modalities. Policies that only receive the reference trajectories but no sensory input (\textit{Ours (Only Ref)})~-- effectively operating open-loop~-- perform poorly across all maneuvers. Policies that have access to visual input but not to inertial data \textit{(Ours (No IMU))} perform similarly poorly since they do not have sufficient information to determine the absolute scale of the ego-motion. On the other hand, policies that only rely on inertial data for sensing \textit{(Ours (No FT))} are able to safely fly most maneuvers. Even though such controllers only have access to inertial data, they exhibit significantly lower tracking error than the \textit{VIO-MPC} baseline.
\rebutall{
However, the longer the maneuver, the larger the drift accumulated by purely-inertial \textit{(Ours (No FT))} controllers.
When both inertial and visual data is incorporated \textit{(Ours)}, drift is reduced and accuracy improves.
For the longest sequence \textit{(Combo)}, the abstracted visual input raises the success rate by 10 percentage points.
}

Fig.~\ref{fig:man_len_generalization} analyzes the evolution of tracking errors and success rates of different methods over time.
For this experiment, we trained a policy to repeatedly fly barrel rolls for four seconds. We evaluate robustness and generalization of the learned policy by flying the maneuver for up to 20 seconds at test time. The results again show that (abstracted) visual input reduces drift and increases robustness.
The controller that has access to both visual and inertial data \emph{(Ours)} is able to perform barrel rolls for 20 seconds without a single failure.

\begin{table}[t]
  \robustify\bfseries
  \newcolumntype{R}[0]{S[table-format=2(2),table-number-alignment = center,separate-uncertainty = true,detect-weight,detect-mode]}
    \centering
    \scalebox{0.82}{
    \begin{tabular}{c | Rc | Rc | Rc}
    \toprule
    Input & \multicolumn{2}{c}{Train} & \multicolumn{2}{c}{Test 1} & \multicolumn{2}{c}{Test 2}\\
    \cmidrule(l){2-7}
    & {Error ($\downarrow$)} & Success ($\uparrow$) & {Error ($\downarrow$)} & Success ($\uparrow$) & {Error ($\downarrow$)} & Success ($\uparrow$)\\
    \midrule
    Image & 90(32) & 80\% & $\infty$ & 0\% & $\infty$ & 0\%\\
    Ours & \bfseries 53(15)& \textbf{100\%} & \bfseries 58(18) & \textbf{100\%} & \bfseries 61(11) & \textbf{100\%}\\
    \bottomrule
    \end{tabular}}
    \caption{Sim-to-sim transfer for different visual input modalities. Policies that directly rely on images as input do not transfer to scenes with novel appearance (Test 1, Test 2). Feature tracks enable reliable transfer. Results are averaged over $10$ runs.}
    \label{tab:sim2sim}
    \vspace{-3mm}
\end{table}

To validate the importance of input abstraction, we compare our approach to a network that uses raw camera images instead of feature tracks as visual input.
This network substitutes the PointNet in the input branch with a $5$-layer convolutional network that directly operates on image pixels, but retains the same structure otherwise.
We train this network on the Matty Flip and evaluate its robustness to changes in the background images.
The results are summarized in Table~\ref{tab:sim2sim}.
In the training environment, the image-based network has a success rate of only 80\%, with a 58\% higher tracking error than the controller that receives an abstraction of the visual input in the form of feature tracks \emph{(Ours)}. We attribute this to the higher sample complexity of learning from raw pixels~\cite{Zhou2019DoesCV}.
Even more dramatically, the image-based controller fails completely when tested with previously unseen background images (\emph{Test 1}, \emph{Test 2}). (For backgrounds, we use randomly sampled images from the COCO dataset~\cite{Lin2014}.)
In contrast, our approach maintains a 100\% success rate in these conditions.

\begin{table}[t!]
    \centering
    \small
    \setlength{\tabcolsep}{3pt}
    \scalebox{0.9}{
    \begin{tabular}{lccc}
    \toprule
         Maneuver & Power Loop & Barrel Roll & Matty Flip \\
        \midrule
        Ours (No FT) & 100\% & 90\% & 100\%\\
        Ours & 100\% & 100\% & 100\% \\
        \bottomrule
    \end{tabular}}
    \caption{Success rate across 10 runs on the physical platform.}
    \label{tab:real_world}
\end{table}

\subsection{Deployment in the Physical World}
We now perform direct simulation-to-reality transfer of the learned controllers.
We use exactly the same sensorimotor controllers that were learned in simulation and quantitatively evaluated in Table~\ref{tab:sim_results} to fly a physical quadrotor, with no fine-tuning.
Despite the differences in the appearance of simulation and reality (see Fig.~\ref{fig:sim2real_fig}), the abstraction scheme we use facilitates successful deployment of simulation-trained controllers in the physical world. The controllers are shown in action in the supplementary video.

We further evaluate the learned controllers with a series of quantitative experiments on the physical platform.
The success rates of different maneuvers are shown in Table~\ref{tab:real_world}.
Our controllers can fly all maneuvers with no intervention.
An additional experiment is presented in Fig.~\ref{fig:multi_loops}, where a controller that was trained for a single loop was tested on repeated execution of the maneuver with no breaks. The results indicate that using all input modalities, including the abstracted visual input in the form of feature tracks, enhances robustness.

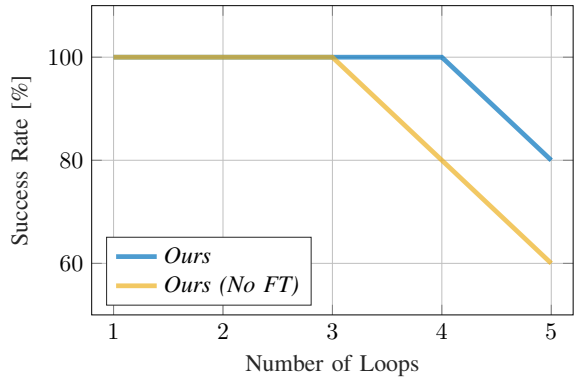
\begin{figure}[t]
    \centering
    \scalebox{0.9}{
%
%
\definecolor{mycolor1}{rgb}{0.00000,0.44700,0.74100}%
\definecolor{mycolor2}{rgb}{0.92900,0.69400,0.12500}%
\begin{tikzpicture}

\begin{axis}[%
width=2.8in,
height=1.8in,
at={(0.0in,0.0in)},
scale only axis,
xmin=0.8,
xmax=5.2,
xlabel style={font=\color{white!15!black}},
xlabel={Number of Loops},
ymin=50,
ymax=110,
ylabel style={font=\color{white!15!black}},
ylabel={Success Rate [\%]},
axis background/.style={fill=white},
xmajorgrids,
ymajorgrids,
legend style={at={(0.03,0.03)}, anchor=south west, legend cell align=left, align=left, draw=white!15!black}
]
\addplot [color=mycolor1, line width=2.0pt, opacity=0.7]
  table[row sep=crcr]{%
1	100\\
2	100\\
3	100\\
4	100\\
5	80\\
};
\addlegendentry{\textit{Ours}}

\addplot [color=mycolor2, line width=2.0pt, opacity=0.7]
  table[row sep=crcr]{%
1	100\\
2	100\\
3	100\\
4	80\\
5	60\\
};
\addlegendentry{\textit{Ours (No FT)}}

\end{axis}
\end{tikzpicture}
    \vspace{-1mm}
    \caption{Number of successful back-to-back Power Loops on the physical quadrotor before the human expert pilot had to intervene. Results are averaged over 5 runs.}
    \label{fig:multi_loops}
\vspace{-3mm}
\end{figure}
\section{Conclusion}
Our approach is the first to enable an autonomous flying machine to perform a wide range of acrobatics maneuvers that are highly challenging even for expert human pilots.
The approach relies solely on onboard sensing and computation, and leverages sensorimotor policies that are trained entirely in simulation.
We have shown that designing appropriate abstractions of the input facilitates direct transfer of the policies from simulation to physical reality.
The presented methodology is not limited to autonomous flight and can enable progress in other areas of robotics.

\section*{Acknowledgement}
This work was supported by the Intel Network on Intelligent Systems, the National Centre of Competence in Research Robotics (NCCR) through the Swiss National Science Foundation, and the SNSF-ERC Starting Grant.
The authors especially thank Thomas L\"angle, Christian Pfeiffer, Manuel Sutter and Titus Cieslewski for their contributions to the design and help with the experiments. 

\newpage
\balance
\bibliographystyle{unsrtnat}
\bibliography{references}

\begin{thebibliography}{35}
\providecommand{\natexlab}[1]{#1}
\providecommand{\url}[1]{\texttt{#1}}
\expandafter\ifx\csname urlstyle\endcsname\relax
  \providecommand{\doi}[1]{doi: #1}\else
  \providecommand{\doi}{doi: \begingroup \urlstyle{rm}\Url}\fi

\bibitem[Lupashin et~al.(2010)Lupashin, Sch{\"o}llig, Sherback, and
  D'Andrea]{lupashin2010simple}
Sergei Lupashin, Angela Sch{\"o}llig, Michael Sherback, and Raffaello D'Andrea.
\newblock A simple learning strategy for high-speed quadrocopter multi-flips.
\newblock In \emph{{IEEE} International Conference on Robotics and Automation
  (ICRA)}, pages 1642--1648, 2010.

\bibitem[Abbeel et~al.(2010)Abbeel, Coates, and Ng]{abbeel2010autonomous}
Pieter Abbeel, Adam Coates, and Andrew~Y Ng.
\newblock Autonomous helicopter aerobatics through apprenticeship learning.
\newblock \emph{The International Journal of Robotics Research}, 29\penalty0
  (13):\penalty0 1608--1639, 2010.

\bibitem[Bry et~al.(2015)Bry, Richter, Bachrach, and Roy]{bry2015aggressive}
Adam Bry, Charles Richter, Abraham Bachrach, and Nicholas Roy.
\newblock Aggressive flight of fixed-wing and quadrotor aircraft in dense
  indoor environments.
\newblock \emph{The International Journal of Robotics Research}, 34\penalty0
  (7):\penalty0 969--1002, 2015.

\bibitem[Cadena et~al.(2016)Cadena, Carlone, Carrillo, Latif, Scaramuzza,
  Neira, Reid, and Leonard]{Cadena2016}
Cesar Cadena, Luca Carlone, Henry Carrillo, Yasir Latif, Davide Scaramuzza,
  Jos{\'{e}} Neira, Ian~D. Reid, and John~J. Leonard.
\newblock Past, present, and future of simultaneous localization and mapping:
  Toward the robust-perception age.
\newblock \emph{IEEE Transactions on Robotics}, 32\penalty0 (6):\penalty0
  1309--1332, 2016.

\bibitem[Hwangbo et~al.(2017)Hwangbo, Sa, Siegwart, and
  Hutter]{hwangbo2017control}
Jemin Hwangbo, Inkyu Sa, Roland Siegwart, and Marco Hutter.
\newblock Control of a quadrotor with reinforcement learning.
\newblock \emph{IEEE Robotics and Automation Letters}, 2\penalty0 (4):\penalty0
  2096--2103, 2017.

\bibitem[Falanga et~al.(2018)Falanga, Foehn, Lu, and
  Scaramuzza]{falanga2018pampc}
Davide Falanga, Philipp Foehn, Peng Lu, and Davide Scaramuzza.
\newblock {PAMPC}: Perception-aware model predictive control for quadrotors.
\newblock In \emph{IEEE/RSJ International Conference on Intelligent Robots and
  Systems (IROS)}, 2018.

\bibitem[Falanga et~al.(2017)Falanga, Mueggler, Faessler, and
  Scaramuzza]{aggressive_falanga}
Davide Falanga, Elias Mueggler, Matthias Faessler, and Davide Scaramuzza.
\newblock Aggressive quadrotor flight through narrow gaps with onboard sensing
  and computing using active vision.
\newblock In \emph{IEEE International Conference on Robotics and Automation
  (ICRA)}, 2017.

\bibitem[Shen et~al.(2013)Shen, Mulgaonkar, Michael, and
  Kumar]{Shen2013aggressive}
Shaojie Shen, Yash Mulgaonkar, Nathan Michael, and Vijay Kumar.
\newblock Vision-based state estimation and trajectory control towards
  high-speed flight with a quadrotor.
\newblock In \emph{Robotics: Science and Systems (RSS)}, 2013.

\bibitem[Zhang et~al.(2016)Zhang, Kahn, Levine, and Abbeel]{ZhangKLA16}
Tianhao Zhang, Gregory Kahn, Sergey Levine, and Pieter Abbeel.
\newblock Learning deep control policies for autonomous aerial vehicles with
  {MPC}-guided policy search.
\newblock In \emph{IEEE International Conference on Robotics and Automation
  (ICRA)}, pages 528--535, 2016.

\bibitem[Mellinger et~al.(2012)Mellinger, Michael, and
  Kumar]{mellinger2012trajectory}
Daniel Mellinger, Nathan Michael, and Vijay Kumar.
\newblock Trajectory generation and control for precise aggressive maneuvers
  with quadrotors.
\newblock \emph{The International Journal of Robotics Research}, 31\penalty0
  (5):\penalty0 664--674, 2012.

\bibitem[Chen and P{\'e}rez-Arancibia(2019)]{chen2019controller}
Ying Chen and N{\'e}stor~O P{\'e}rez-Arancibia.
\newblock Controller synthesis and performance optimization for aerobatic
  quadrotor flight.
\newblock \emph{IEEE Transactions on Control Systems Technology}, pages 1--16,
  2019.

\bibitem[Loianno et~al.(2016)Loianno, Brunner, McGrath, and
  Kumar]{loianno2016estimation}
Giuseppe Loianno, Chris Brunner, Gary McGrath, and Vijay Kumar.
\newblock Estimation, control, and planning for aggressive flight with a small
  quadrotor with a single camera and {IMU}.
\newblock \emph{IEEE Robotics and Automation Letters}, 2\penalty0 (2):\penalty0
  404--411, 2016.

\bibitem[Lee et~al.(2020)Lee, Gibson, and Theodorou]{lee2020aggressive}
Keuntaek Lee, Jason Gibson, and Evangelos~A Theodorou.
\newblock Aggressive perception-aware navigation using deep optical flow
  dynamics and pixelmpc.
\newblock \emph{arXiv:2001.02307}, 2020.

\bibitem[Li et~al.(2019)Li, Ozturk, De~Wagter, de~Croon, and
  Izzo]{li2019aggressive}
Shuo Li, Ekin Ozturk, Christophe De~Wagter, Guido de~Croon, and Dario Izzo.
\newblock Aggressive online control of a quadrotor via deep network
  representations of optimality principles.
\newblock \emph{arXiv:1912.07067}, 2019.

\bibitem[M{\"u}ller et~al.(2018)M{\"u}ller, Dosovitskiy, Ghanem, and
  Koltun]{muller2018driving}
Matthias M{\"u}ller, Alexey Dosovitskiy, Bernard Ghanem, and Vladen Koltun.
\newblock Driving policy transfer via modularity and abstraction.
\newblock In \emph{Conference on Robot Learning {(CoRL)}}, 2018.

\bibitem[Loquercio et~al.(2020)Loquercio, Kaufmann, Ranftl, Dosovitskiy,
  Koltun, and Scaramuzza]{loquercio2019deep}
Antonio Loquercio, Elia Kaufmann, Ren{\'e} Ranftl, Alexey Dosovitskiy, Vladlen
  Koltun, and Davide Scaramuzza.
\newblock Deep drone racing: From simulation to reality with domain
  randomization.
\newblock \emph{IEEE Transactions on Robotics}, 36\penalty0 (1):\penalty0
  1--14, 2020.

\bibitem[Sadeghi and Levine(2017)]{sadeghi2016cad2rl}
Fereshteh Sadeghi and Sergey Levine.
\newblock {CAD2RL}: Real single-image flight without a single real image.
\newblock In \emph{Robotics: Science and Systems (RSS)}, 2017.

\bibitem[Chen et~al.(2019)Chen, Zhou, Koltun, and
  Kr\"ahenb\"uhl]{chen2019learning}
Dian Chen, Brady Zhou, Vladlen Koltun, and Philipp Kr\"ahenb\"uhl.
\newblock Learning by cheating.
\newblock In \emph{Conference on Robot Learning (CoRL)}, 2019.

\bibitem[Zhou et~al.(2019)Zhou, Kr{\"a}henb{\"u}hl, and Koltun]{Zhou2019DoesCV}
Brady Zhou, Philipp Kr{\"a}henb{\"u}hl, and Vladlen Koltun.
\newblock Does computer vision matter for action?
\newblock \emph{Science Robotics}, 4\penalty0 (30), 2019.

\bibitem[Mellinger and Kumar(2011)]{mellinger2011minimum}
Daniel Mellinger and Vijay Kumar.
\newblock Minimum snap trajectory generation and control for quadrotors.
\newblock In \emph{IEEE International Conference on Robotics and Automation
  (ICRA)}, pages 2520--2525, 2011.

\bibitem[Mueller et~al.(2013)Mueller, Hehn, and
  D'Andrea]{mueller2013computationally}
Mark~W Mueller, Markus Hehn, and Raffaello D'Andrea.
\newblock A computationally efficient algorithm for state-to-state quadrocopter
  trajectory generation and feasibility verification.
\newblock In \emph{IEEE/RSJ International Conference on Intelligent Robots and
  Systems (IROS)}, pages 3480--3486, 2013.

\bibitem[Pan et~al.(2018)Pan, Cheng, Saigol, Lee, Yan, Theodorou, and
  Boots]{pan2018agile}
Yunpeng Pan, Ching-An Cheng, Kamil Saigol, Keuntaek Lee, Xinyan Yan, Evangelos
  Theodorou, and Byron Boots.
\newblock Agile autonomous driving using end-to-end deep imitation learning.
\newblock In \emph{Robotics: Science and Systems (RSS)}, 2018.

\bibitem[Gibbs and Su(2002)]{gibbs2002choosing}
Alison~L Gibbs and Francis~Edward Su.
\newblock On choosing and bounding probability metrics.
\newblock \emph{International Statistical Review}, 70\penalty0 (3):\penalty0
  419--435, 2002.

\bibitem[Ross et~al.(2011)Ross, Gordon, and Bagnell]{dagger}
St{\'e}phane Ross, Geoffrey Gordon, and Drew Bagnell.
\newblock A reduction of imitation learning and structured prediction to
  no-regret online learning.
\newblock In \emph{International Conference on Artificial Intelligence and
  Statistics {(AISTATS)}}, 2011.

\bibitem[Forster et~al.(2017)Forster, Carlone, Dellaert, and
  Scaramuzza]{svo_gtsam}
C.~Forster, L.~Carlone, F.~Dellaert, and D.~Scaramuzza.
\newblock On-manifold preintegration for real-time visual-inertial odometry.
\newblock \emph{{IEEE} Transactions on Robotics}, 3\penalty0 (1):\penalty0
  1--21, 2017.

\bibitem[Engel et~al.(2018)Engel, Koltun, and Cremers]{engel2018direct}
Jakob Engel, Vladlen Koltun, and Daniel Cremers.
\newblock Direct sparse odometry.
\newblock \emph{IEEE Transactions on Pattern Analysis and Machine Intelligence
  {(T-PAMI)}}, 40\penalty0 (3):\penalty0 611--625, 2018.

\bibitem[Clark et~al.(2017)Clark, Wang, Wen, Markham, and
  Trigoni]{clark2017vinet}
Ronald Clark, Sen Wang, Hongkai Wen, Andrew Markham, and Niki Trigoni.
\newblock {VIN}et: Visual-inertial odometry as a sequence-to-sequence learning
  problem.
\newblock In \emph{AAAI Conference on Artificial Intelligence}, 2017.

\bibitem[Qin et~al.(2018)Qin, Li, and Shen]{qin2018vins}
Tong Qin, Peiliang Li, and Shaojie Shen.
\newblock {VINS-Mono}: A robust and versatile monocular visual-inertial state
  estimator.
\newblock \emph{IEEE Transactions on Robotics}, 34\penalty0 (4):\penalty0
  1004--1020, 2018.

\bibitem[Bai et~al.(2018)Bai, Kolter, and Koltun]{bai2018empirical}
Shaojie Bai, J~Zico Kolter, and Vladlen Koltun.
\newblock An empirical evaluation of generic convolutional and recurrent
  networks for sequence modeling.
\newblock \emph{arXiv:1803.01271}, 2018.

\bibitem[Harris and Stephens(1988)]{harris1988combined}
Christopher~G. Harris and Mike Stephens.
\newblock A combined corner and edge detector.
\newblock In \emph{Proceedings of the Alvey Vision Conference}, 1988.

\bibitem[Lucas and Kanade(1981)]{lucas1981iterative}
Bruce~D. Lucas and Takeo Kanade.
\newblock An iterative image registration technique with an application to
  stereo vision.
\newblock In \emph{International Joint Conference on Artificial Intelligence},
  1981.

\bibitem[Ranftl and Koltun(2018)]{ranftl2018deep}
Ren{\'e} Ranftl and Vladlen Koltun.
\newblock Deep fundamental matrix estimation.
\newblock In \emph{European Conference on Computer Vision (ECCV)}, 2018.

\bibitem[Furrer et~al.(2016)Furrer, Burri, Achtelik, and Siegwart]{Furrer2016}
Fadri Furrer, Michael Burri, Markus Achtelik, and Roland Siegwart.
\newblock {RotorS}--a modular gazebo {MAV} simulator framework.
\newblock In \emph{Robot Operating System (ROS)}, pages 595--625. Springer,
  2016.

\bibitem[Kingma and Ba(2015)]{KingmaB14}
Diederik~P. Kingma and Jimmy Ba.
\newblock Adam: {A} method for stochastic optimization.
\newblock In \emph{International Conference on Learning Representations
  (ICLR)}, 2015.

\bibitem[Lin et~al.(2014)Lin, Maire, Belongie, Hays, Perona, Ramanan,
  Doll{\'a}r, and Zitnick]{Lin2014}
Tsung-Yi Lin, Michael Maire, Serge Belongie, James Hays, Pietro Perona, Deva
  Ramanan, Piotr Doll{\'a}r, and C.~Lawrence Zitnick.
\newblock Microsoft {COCO}: Common objects in context.
\newblock In \emph{European Conference on Computer Vision (ECCV)}, 2014.

\end{thebibliography}

\end{document}